\newtheorem{thm}{\protect\theoremname}
  \theoremstyle{plain}
  \newtheorem{lem}[thm]{Lemma}
  \theoremstyle{plain}
  \newtheorem{prop}[thm]{Proposition}
\def\BibTeX{{\rm B\kern-.05em{\sc i\kern-.025em b}\kern-.08em
    T\kern-.1667em\lower.7ex\hbox{E}\kern-.125emX}}
\begin{document}

\title{Decision-Oriented Communications: Application to Energy-Efficient Resource Allocation}
\author{\IEEEauthorblockN{Hang Zou\IEEEauthorrefmark{1},
Chao Zhang\IEEEauthorrefmark{1}, Samson Lasaulce\IEEEauthorrefmark{1}, Lucas Saludjian\IEEEauthorrefmark{2}, and Patrick Panciatici\IEEEauthorrefmark{2}}

\IEEEauthorblockA{\IEEEauthorrefmark{1}LSS, CNRS-CentraleSupelec-Univ. Paris Sud, Gif-sur-Yvette, France}
\IEEEauthorblockA{\IEEEauthorrefmark{2}RTE, France}}

\maketitle

\begin{abstract}
In this paper, we introduce the problem of decision-oriented communications, that is, the goal of the source is to send the right amount of information in order for the intended destination to execute a task. More specifically, we restrict our attention to how the source should quantize information so that the destination can maximize a utility function which represents the task to be executed  only  knowing the quantized information. For example, for utility functions under the form $u\left(\boldsymbol{x};\ \boldsymbol{g}\right)$, $\boldsymbol{x}$ might represent a decision in terms of using some radio resources and $\boldsymbol{g}$ the system state  which is only observed through its quantized version $Q(\boldsymbol{g})$. Both in the case where the utility function is known and the case where it is only observed through its realizations, we provide solutions to determine such a quantizer. We show how this approach applies to energy-efficient power allocation. In particular, it is seen that quantizing the state very roughly is perfectly suited to sum-rate-type function maximization, whereas energy-efficiency metrics are more sensitive to imperfections. 

%
%
%
%
%
%
%
\end{abstract}

\begin{IEEEkeywords}
Quantization, Resource Allocation, Learning, Neural Networks, Energy-Efficiency.
\end{IEEEkeywords}

\section{Introduction}
\label{sec:Introdution}
Since the pioneering and fundamental works of Shannon, the ultra dominant paradigm for designing a communication system is that communications must satisfy quality requirements. Typically, the bit error rate, the packet error rate, the outage probability or the distortion must be minimized. It turns out that the classical paradigm consisting in pursuing communication reliability or possibly security may be not suited to scenarios such as systems where communications occur in order for a given task to be executed. To be more concrete, in many resource allocation problems, some knowledge about the system state is necessary to make a decision in terms of using the available resources but having an accurate or too reliable knowledge about the state might induce a prohibitive amount of signaling, while only leading to a marginal increase of the system performance, say in terms of utility function. This motivates us to develop a new communication paradigm, which we refer to as decision-oriented communications (DOC). In the scope of this paper, more modestly, we restrict our attention to one given operation of the design of decision-oriented transmitters, that is decision-oriented quantization (DOQ). 

In this paper, we assume that the task the system should execute can be represented as an optimization problem. We assume that the ultimate objective to be reached is to maximize a function $u\left(\boldsymbol{x};\ \boldsymbol{g}\right)$ (called the utility function) with respect to the decision variables $\boldsymbol{x}$ while having an imperfect knowledge of the function parameters $\boldsymbol{g}$. Technically, we want to know how to design a device which has to quantize the actual parameters $\boldsymbol{g}$ before sending them to the decision-making entity which has to choose the decision variables $\boldsymbol{x}$. A classical instance of such a problem occurs when a receiver has to feedback information (e.g., channel state information CSI) to a transmitter so that the latter adapts its transmission scheme; this is the problem of quantized CSI.

When inspecting the literature of communications, it appears that the problem under consideration has not been explored yet, and definitely not from the concrete perspective approach taken in this paper. There exist some works on goal-oriented communications such as \cite{goldreich-jacm-2012} but those rely on logical aspects of computer science and do not formulate the problem as an optimization problem and do not tackle the problem from the perspective of coding and thus quantization. 
distortion. Concerning the problem of quantization, there exist some works (e.g., \cite{wang-ssp-2012}) where the performance criterion is not distortion (as originally proposed in \cite{lloyd-TIT-1982}) but a more general Lp-norm. But the approach developed in the present paper is not only more general because it concerns arbitrary utility functions but also corresponds to a new approach of designing a communication system. In particular, both source coding and channel coding should be revisited. Additionally, we tackle the important case where the performance criterion (i.e., the utility function) is not known but only observed through its realizations, which is different from the case studied in \cite{Zhang-wiopt-2016}.

%

The paper is structured as follows. In Sec.~\ref{sec:Problem Formulation} we introduce the problem of decision-oriented quantization. In Sec.~\ref{sec:General Algorithm for GOQ}, we develop two solutions to determine such a quantizer, the solutions respectively corresponding to the case where the utility function is known and the case where only its realizations can be observed. In Sec.~\ref{sec:Application To Energy Efficiency Communication}, we show how to apply our approach to the problem of energy-efficient resource allocation for a multiband and MIMO communications. The paper is concluded by Sec.~\ref{sec:Conclusion}.

\section{Problem Formulation}
\label{sec:Problem Formulation}

Consider the following utility function $u\left(\boldsymbol{x};\ \boldsymbol{g}\right)$,
where $\boldsymbol{x}$ represents the resources that transmitter
can allocate and $\boldsymbol{g}$ is an unknown environment parameter
that is only measurable but uncontrollable. The utility function can be for example
, the energy efficiency, transmission rate and transmission error rate.
Let $\mathcal{X}$ be the decision space which is generally $\mathbb{R}^{N}$
with $N$ the number of users and $\mathcal{G}$ be the parameter space which can
be $\mathbb{R}^{K}$ or $\mathbb{C}^{K}$ depending on the situation
with $K$ the dimension of vector $\boldsymbol{g}$.
Obviously, given a fixed $\boldsymbol{g}$, one is capable to maximize the utility function:
\begin{equation}
\label{theoretical maximum}
U\left(\boldsymbol{g}\right)=\max_{\boldsymbol{x}\in\mathfrak{\mathcal{X}}}u\left(\boldsymbol{x};\ \boldsymbol{g}\right)
\end{equation}
and find maximum point of this function:
\begin{equation}
\label{theoretical maximizer}
\boldsymbol{x}^{*}\left(\boldsymbol{g}\right)\in\arg\max_{\boldsymbol{x}\in\mathfrak{\mathcal{X}}}u\left(\boldsymbol{x};\ \boldsymbol{g}\right)
\end{equation}
However, knowing  all information about the environment e.g., the channel state
information may be too demanding in practical application or even consumptive.
Providing only with an estimated version (or even a quantized version ) of CSI
may not change the decision of the transmitter without losing the optimality. A quantizer is just a function s.t. $\mathcal{Q}\left(\boldsymbol{g}\right)=\widehat{\boldsymbol{g}}$
. More precisely, a quantizer divides the entire space $\mathcal{G}$
of the parameter $\boldsymbol{g}$ into several cells $\mathcal{C}_{1},\dots,\mathcal{C}_{M}$
s.t. 
\begin{align*}
\forall i\neq j,\ \mathcal{C}_{i}\bigcap\mathcal{C}_{j} & =\varnothing\ \textrm{and}\ \bigcup_{i=1}^{M}\mathcal{C}_{i}=\mathcal{G}
\end{align*}
with some corresponding representatives $\boldsymbol{r}_{1},\dots,\boldsymbol{r}_{M}$.
The quantization rule is :
\begin{equation}
\label{eq:quantization rule of classic quantizer}
\mathcal{Q}\left(\boldsymbol{g}\right)=\boldsymbol{r}_{k},\ \textrm{if}\ \boldsymbol{g}\in\mathcal{C}_{k}
\end{equation}
In this scenario, the optimization procedure can still be performed
using a quantized version of the parameter $\widehat{\boldsymbol{g}}$:
\begin{equation}
\label{eq: intuitive achievable maximum by quantization}
\widehat{U}\left(\mathcal{Q}\left(\boldsymbol{g}\right)\right)=\max_{\boldsymbol{x}\in\mathfrak{\mathcal{X}}}u\left(\boldsymbol{x};\ \boldsymbol{\widehat{g}}\right)=\max_{\boldsymbol{x}\in\mathfrak{\mathcal{X}}}u\left(\boldsymbol{x};\ \mathcal{Q}\left(\boldsymbol{g}\right)\right) 
\end{equation}
One can find the maximum point (saying the quantized maximum point ) for the
the utility function using a quantized version of $\boldsymbol{g}$:
\begin{equation}
\label{eq:quantized optimizer}
\boldsymbol{\widehat{x}}^{*}\left(\boldsymbol{\widehat{g}}\right)\in\arg\max_{\boldsymbol{x}\in\mathfrak{\mathcal{X}}}u\left(\boldsymbol{x};\ \boldsymbol{\widehat{g}}\right)
\end{equation}



Even if the maximum point is found based on a quantized version of the parameter, the utility
function still undergoes a realistic channel. 
Therefore the realistic maximum of the utility function knowing merely
a quantized channel parameter can be written as:
\begin{equation}
\label{eq:final goal of GOQ}
\widetilde{U}\left(\mathcal{Q}\left(\boldsymbol{g}\right)\right)=u\left(\boldsymbol{\widehat{x}}^{*}\left(\boldsymbol{\widehat{g}}\right);\ \boldsymbol{g}\right)
\end{equation}

The objective of DOQ is to find a quantizer $\mathcal{Q^{\mathrm{*}}}$
such that:

\begin{equation}
\mathcal{Q^{\mathrm{*}}}\in\arg\min_{\mathcal{Q}}\mathbb{E}_{\boldsymbol{g}}\left[\left\Vert U\left(\boldsymbol{g}\right)-\widetilde{U}\left(\mathcal{Q}\left(\boldsymbol{g}\right)\right)\right\Vert ^{2}\right]\label{eq: goal of GOQ}
\end{equation}

The expectation is taken over the probability density distribution (p.d.f) $\phi\left(\boldsymbol{g}\right)$ of the parameter $\boldsymbol{g}$.

\section{Proposed solutions for arbitrary utility functions}
\label{sec:General Algorithm for GOQ}
As  explained in Section \ref{sec:Problem Formulation}, the
goal of the DOQ is to find the optimal quantizer that minimizes the
optimality loss (in terms of maximizing the utility function) which is induced by imperfect knowledge of $\boldsymbol{g}$. In general, this problem
can be too complicated to solve due to the continuity of the decision
space $\mathcal{X}$. Assume that the effective decision set is finite due to
the fact the feedback information from the receiver is limited:

\begin{equation}
\label{eq:discrete decision space assumption}
\mathcal{D}=\left\{ \boldsymbol{d}_{1},\dots,\boldsymbol{d}_{M}\right\} \subseteq\mathcal{X},\ M<+\infty
\end{equation}

This happens for examples in many power control systems in some cellular
communication standards or even an equivalent optimality between $\mathcal{D}$
and $\mathcal{X}$ proved in \cite{bcpsr2008}. In what follows, only
discrete decision space will be considered. Moreover, for the sake of simplicity, we only
use vector notation of the decision. The extension to matrix-form decision can be treated in the same way and will be presented in Sec. IV-B.

Due to this assumption, the desired quantizer is equivalent to find
a pair $\left(\mathcal{D},\mathcal{C}\right)$ that maximizes the
expectation of our utility function, where  $\mathcal{C}=\left\{ \mathcal{C}_{1},\dots,\mathcal{C}_{M}\right\} $
is the quantization region set. A \emph{decisional quantizer} is just
a mapping that maps a given parameter $\boldsymbol{g}$ to a unique
decision $\boldsymbol{x}$:
\begin{align}
f\ :\ \left\{ \mathcal{C}_{1},\dots,\mathcal{C}_{M}\right\}  & \rightarrow\mathcal{D}\nonumber \\
\boldsymbol{g} & \mapsto\boldsymbol{d}\label{eq:definition of a decisional quantizer function}
\end{align}
Therefore a decisional quantizer \textbf{\textcolor{black}{\emph{
yields not the quantization value of the parameter but instead a unique
decision associated with it}}}. Obviously the optimal pair $\left(\mathcal{D}^{*},\mathcal{C}^{*}\right)$
is given by:

\begin{equation}
\left(\mathcal{D}^{*},\mathcal{C}^{*}\right)\in\arg\max_{\left(\mathcal{D},\mathcal{C}\right)}\mathbb{E}_{\boldsymbol{g}}\left[u\left(\boldsymbol{x};\ \boldsymbol{g}\right)\left|\left(\mathcal{D},\mathcal{C}\right)\right.\right]\label{eq:opitmal goal-oriented quantizer pair}
\end{equation}

where 
\begin{equation}
\mathbb{E}_{\boldsymbol{g}}\left[u\left(\boldsymbol{x};\ \boldsymbol{g}\right)\left|\left(\mathcal{D},\mathcal{C}\right)\right.\right]=\sum_{k=1}^{M}\int_{\mathcal{C}_{k}}u\left(\boldsymbol{d}_{k};\ \boldsymbol{g}\right)\phi\left(\boldsymbol{g}\right)d\boldsymbol{g}\label{eq:utility expectation in region-sum}
\end{equation}

\subsection{Model-based solution (known utility function)}
\label{subsec:Model-based Scenario}
We firstly assume that the utility function is known, i.e., an explicit expression of $u\left(\boldsymbol{x};\ \boldsymbol{g}\right)$ is available.

However, finding $\left(\mathcal{D}^{*},\mathcal{C}^{*}\right)$ jointly
is complicated, the DOQ problem can be split in two steps as the
classical quantization algorithm does:
\begin{enumerate}
\item The representative-to-cell step which is essentially 
finding the optimal quantization region (cells) given
the concrete decision space $\mathcal{D}$ (representatives):
\begin{equation}
\mathcal{C}_{k}^{*}=\left\{ \boldsymbol{g}\in\mathcal{G}\left|u\left(\boldsymbol{d}_{k};\ \boldsymbol{g}\right)=\max_{l}u\left(\boldsymbol{d}_{l};\ \boldsymbol{g}\right)\right.\right\}
\label{eq:initial definition for decisional quantizer}
\end{equation}

where $\mathcal{C}_{k}^{*}$ is called the  decision region
corresponding to decision $\boldsymbol{d}_{k}$.
\item The cell-to-representative step which 
is essentially finding the optimal decision space
(representatives) given the concrete quantization regions (cells) $\mathcal{C}$:
\begin{equation}
\boldsymbol{d}_{k}^{*}\in\arg\max_{\boldsymbol{d}\in\mathcal{X}}\int_{\mathcal{C}_{k}}u\left(\boldsymbol{d};\ \boldsymbol{g}\right)\phi\left(\boldsymbol{g}\right)d\boldsymbol{g}
\label{eq:optimal discrete decision space}
\end{equation}
\end{enumerate}
Furthermore, if step $1$ and step $2$ can be separately solved, designing an  algorithm which performs step $1$ and step $2$ in an iterative manner to find the
optimal quantizer will be possible, as the  classical
quantization algorithm, e.g., Lloyd-Max Algorithm \cite{lloyd-TIT-1982}  operates. The algorithm is summarized in Algo.~\ref{alg:Decisional-Quantization-Algorithm}. The convergence of Algo.~\ref{alg:Decisional-Quantization-Algorithm} can be proved by induction and is omitted here.
However, quiet often, an explicit expression of the utility function is difficult to obtain or maybe never exists. For example, the global performance of a massive cellular network may be too complicated to know an explicit expression. Nevertheless, we are always able to obtain the value of a specific realization.
Many Machine Learning tools can  kick in in this scenario effortlessly without knowing \emph{a priori} information about the utility function.

\begin{algorithm}[h]
\begin{algorithmic}[1] 
\STATE\ \ Input: utility function $u\left(\boldsymbol{x};\ \boldsymbol{g}\right)$

\STATE\ \ Input: error tolerance $\epsilon$ and max iteration
$T$

\STATE\ \ Input: initial decision set $\mathcal{D}^{\left(0\right)}=\left\{ \boldsymbol{d}_{1}^{\left(0\right)},\dots,\boldsymbol{d}_{M}^{\left(0\right)}\right\} $

\STATE\ \ Input: initial decision region $\mathcal{C}^{\left(0\right)}=\left\{ \mathcal{C}_{1}^{\left(0\right)},\dots,\mathcal{C}_{M}^{\left(0\right)}\right\} $

\STATE\ \ Output: optimal decision pair $\left(\mathcal{D}^{*},\mathcal{C}^{*}\right)$

\STATE\ \ Initialization: set iteration index $i\rightarrow1$

\STATE\ \ $\mathbf{do}$

\STATE\ \ \ \ \ Update iteration index $i\leftarrow i+1$

\STATE\ \textcolor{white}{ass}For all $k\in\left\{ 1,\dots,M\right\} $, update
$\mathcal{C}_{k}^{\left(i\right)}$ from $\mathcal{D}^{\left(i-1\right)}$
using (\ref{eq:initial definition for decisional quantizer})

\STATE\ \ \ \ \ For all $k\in\left\{ 1,\dots,M\right\} $, update
$\boldsymbol{d}_{k}^{\left(i\right)}$ from $\mathcal{C}_{k}^{\left(i\right)}$
using (\ref{eq:optimal discrete decision space})

\STATE\ \ $\boldsymbol{\mathbf{while}}$ $\sum_{k=1}^{M}\left\Vert \boldsymbol{d}_{k}^{\left(i\right)}-\boldsymbol{d}_{k}^{\left(i-1\right)}\right\Vert ^{2}>\varepsilon$
and $i\leq T$

\STATE\ \ $\left(\mathcal{D}^{*},\mathcal{C}^{*}\right)\leftarrow\left(\mathcal{D}^{\left(T\right)},\mathcal{C}^{\left(T\right)}\right)$

\end{algorithmic}

\caption{Decisional Quantization Algorithm \label{alg:Decisional-Quantization-Algorithm}}
\end{algorithm}

\subsection{Model-free approach (unknown utility function)}
\label{subsec:Model-free Scenario}

As we have explained, providing a systematic analytical procedure to partition the parameter space without knowing the explicit expression of the utility function can be very difficult. In this paper, we propose to solve this problem by using a Feed-forward Neural Network (FNN) based learning procedure. Denoting $W_{i,j}^{\left(l\right)}$ the weight between the neuron $i$ in
the $l$-th layer and the the neuron $j$ in $\left(l+1\right)$-th
layer and $b_{j}^{\left(l\right)}$ the bias term for neuron $j$, the basic model for FNN is given
by:

\begin{equation}
o_{j}^{\left(l+1\right)}=f\left(b_{j}^{\left(l\right)}+\sum_{i=1}^{Nd}W_{i,j}^{\left(l\right)}o_{i}^{\left(l\right)}\right)\label{eq: basic relation for a neuron}
\end{equation}

where $o_{j}^{\left(l+1\right)}$ is the output of the neuron $j$
and $o_{i}^{\left(l\right)}$ is the output of the neuron $i$ or
the input signal from neuron $i$ to neuron $j$ and $Nd$ is the number of neurons in each layer. $f\left(\cdot\right)$
is the activation function. Moreover we define the training set as $\mathcal{T}_{n}\coloneqq\left\{ \boldsymbol{g}_{t},\ \theta^{*}_{t}\right\} _{t=1}^{n}$, 
where $\theta_{t}^{*}$ is the optimal decision label  corresponding to
 parameter realization $\boldsymbol{g}_{t}$ gathered somehow,
e.g., the data already collected in the environment, analytical solution given the explicit expression of our utility function or even 
by exhaustive comparison between all possibles decisions:
\begin{equation}
\theta_{t}^{*}\in \arg\max_{\theta\in\left\{ 1,\dots,M\right\} }u\left(\boldsymbol{d}_{\theta};\boldsymbol{g}_{t}\right)
\end{equation}
\begin{figure}[htbp]
\begin{centering}
\includegraphics[scale=0.30]{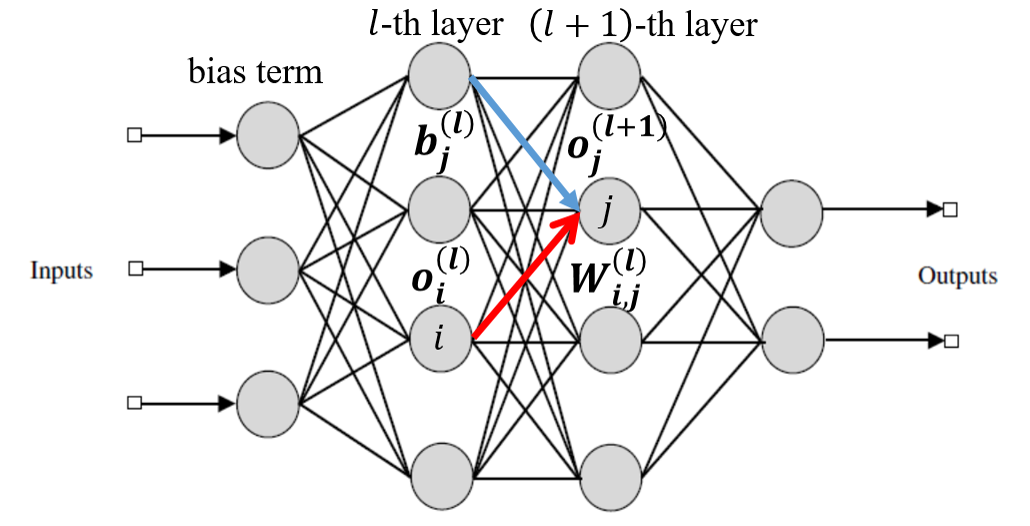}
\par\end{centering}
\caption{Basic structure of a FNN.}
\label{fig:basic FNN model}
\end{figure}

If the error estimation error (test error) is less than some threshold, the NN trained by this training set can give us a reasonable approximation of the real decisional quantizer. Again, if step 2 can be solved, we can use Alg. \ref{alg:Decisional-Quantization-Algorithm} to
solve the DOQ problem by replacing the original step 1 with our Machine Learning based approach.
This procedure can, in fact, be used for any utility function of the form $u(\textbf{x};\textbf{g})$.



\section{Application to Energy-Efficient Communications}
\label{sec:Application To Energy Efficiency Communication}
In this section, we consider a particular utility function, the Energy Efficiency (EE) which 
characterizes the efficiency of transmission in term of energy for multi-bands scenario and MIMO scenario. The general form of EE is given by:

\begin{equation}
u^{\textrm{EE}}\left(\boldsymbol{x};\boldsymbol{g}\right)\coloneqq\frac{F\left(\boldsymbol{x};\boldsymbol{g}\right)}{E\left(\boldsymbol{x}\right)}\label{eq:general form of EE}
\end{equation}

where $F\left(\boldsymbol{x};\boldsymbol{g}\right)$ represents the
lossless utility for the pair $\left(\boldsymbol{x};\boldsymbol{g}\right)$
and $E\left(\boldsymbol{x}\right)$ is the energy (or power) consumed
using policy $\boldsymbol{x}$.

\subsection{Multi-band Energy Efficiency}\label{subsec:Multi-band Energy Efficiency}
We consider the following EE function:

\begin{equation}
u^{\mathrm{MB}}\left(\boldsymbol{p};\boldsymbol{g}\right)=\frac{\sum_{i=1}^{N}w\left(\textrm{SNR}_{i}\right)}{\sum_{i=1}^{N}p_{i}}\label{eq:energy efficiency definition}
\end{equation}

where $i\in\left\{ 1,2,\dots,N\right\} $ is an index which might
represent the band, channel, or user index; $g_{i}>0$ is the channel
gain of $i$-th channel, $\boldsymbol{p}=\left(p_{1},\dots,p_{N}\right)$
is the power allocation vector; $\boldsymbol{g}=\left(g_{1},\dots,g_{N}\right)$
is the vector channels used by transmitter $i$; $\textrm{SNR}_{i}$
is the signal-to-noise ratio associated with channel $i$ chosen as
$\textrm{SNR}_{i}=\frac{p_{i}g_{i}}{\sigma^{2}}$ which suggests no
interference appears between bands, where $\sigma^{2}$ is the received
noise variance. Furthermore, the payoff function which represents
the packet success rate is chosen (see\cite{eepmat2011}).

\begin{equation}
w\left(s\right)=\exp\left(-\frac{c}{s}\right)\label{eq:packet success rate}
\end{equation}

where $c\geq0$ is a parameter related to spectral efficiency.

We firstly consider the single band scenario, i.e., $N=1$, we thus
have:

\begin{equation}
u\left(p;g\right)=\frac{\exp\left(-\frac{c\sigma^{2}}{pg}\right)}{p}\label{eq:energy-efficiency function for 1 user}
\end{equation}

Assume that $p\in\left\{ P_{1},\dots,P_{M}\right\} $ (without loss
of generality, we assume that $P_{1}<\dots<P_{M}$). We pick randomly
two decisions $P_{i}$ and $P_{j}$ (assume that $P_{i}<P_{j}$). 
On can obtain the following lemma \ref{lem:pairwise decision lemma}:
\begin{lem} 
\label{lem:pairwise decision lemma}
The optimal decisional threshold between two decisions $P_{i}$ and $P_{j}$ is:
\begin{equation}
g_{0}^{*}\left(P_{i},P_{j}\right)=\frac{c\sigma^{2}\left[\frac{1}{P_{i}}-\frac{1}{P_{j}}\right]}{\ln P_{j}-\ln P_{i}}>0,\ \textrm{if}\ P_{i}<P_{j}\label{eq:optimal thershold for scalar EE (M decisions)}
\end{equation}
\end{lem}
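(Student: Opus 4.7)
The plan is to compute directly the unique $g$ at which the two decisions $P_i$ and $P_j$ yield the same utility, and then to verify that this crossover value is positive and genuinely plays the role of a threshold (in the sense that one decision strictly dominates on each side of it). Since the representative-to-cell rule (\ref{eq:initial definition for decisional quantizer}) assigns each $g$ to the decision maximizing $u(P_\ell;g)$, the boundary between the cells associated with $P_i$ and $P_j$ is exactly the locus $\{g:u(P_i;g)=u(P_j;g)\}$.

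First I will equate $u(P_i;g)$ and $u(P_j;g)$ using the explicit scalar form (\ref{eq:energy-efficiency function for 1 user}). Because both sides are products of a positive exponential and a positive reciprocal of power, I can take natural logarithms without any sign subtlety. The resulting relation is affine in $1/g$: the exponential terms contribute $-c\sigma^{2}/(P_i g)$ and $-c\sigma^{2}/(P_j g)$, while the prefactors contribute $-\ln P_i$ and $-\ln P_j$. Grouping like terms and isolating $1/g$ then yields the announced closed-form expression for $g_{0}^{*}(P_i,P_j)$.

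Positivity follows immediately from the hypothesis $P_i<P_j$: both the numerator $c\sigma^{2}(1/P_i-1/P_j)$ and the denominator $\ln P_j-\ln P_i$ are strictly positive, and $c,\sigma^{2}>0$. To confirm that $g_{0}^{*}$ is a genuine threshold and not a degenerate coincidence, I will examine the ratio
\begin{equation*}
\frac{u(P_j;g)}{u(P_i;g)} \;=\; \frac{P_i}{P_j}\exp\!\left(\frac{c\sigma^{2}}{g}\left(\frac{1}{P_i}-\frac{1}{P_j}\right)\right),
\end{equation*}
which is strictly decreasing in $g$, tends to $+\infty$ as $g\to 0^{+}$ and to $P_i/P_j<1$ as $g\to+\infty$. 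By the intermediate value theorem the crossing is unique; it must therefore coincide with $g_{0}^{*}$, and the preferred decision switches from $P_j$ (weak channel, where the higher power is worth the extra energy cost) to $P_i$ (strong channel, where less power already secures high success probability) as $g$ passes through $g_{0}^{*}$.

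There is no genuine obstacle in this proof: it reduces to a short algebraic manipulation plus a one-line monotonicity check. The main point of care is the sign/orientation analysis, which is what actually allows the expression to be used inside Algorithm~\ref{alg:Decisional-Quantization-Algorithm} to construct the cells $\mathcal{C}_{k}^{*}$ in the scalar single-band setting, by stitching together pairwise thresholds across consecutive ordered power levels.
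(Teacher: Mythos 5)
Your derivation is correct: equating $u(P_i;g)$ and $u(P_j;g)$, taking logarithms, and solving the resulting affine equation in $1/g$ yields exactly (\ref{eq:optimal thershold for scalar EE (M decisions)}), and your monotonicity check of the ratio $u(P_j;g)/u(P_i;g)$ properly establishes that the crossing is unique and that $P_j$ is preferred below the threshold and $P_i$ above it. The paper states this lemma without proof, and your argument is precisely the intended one, so there is nothing to compare beyond noting that your added uniqueness/orientation analysis is a welcome strengthening of what the paper leaves implicit.
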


One may think there will be $\frac{M(M-1)}{2}$ possible thresholds
in this problem. However there are only $\left(M-1\right)$ effective
thresholds. A threshold is called effective if it is the boundary
of a decision region. We have the following proposition \ref{prop:effictive threshold between two decisions}: 
\begin{prop}
\label{prop:effictive threshold between two decisions}
Only the threshold in form of $g_{0}^{*}\left(P_{i},P_{i+1}\right),i=1,\dots,M-1$ is
effective.
\end{prop}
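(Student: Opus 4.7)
The approach is to prove the contrapositive: for any pair with $i<j$ and $j>i+1$, I will show the candidate threshold $g_0^*(P_i,P_j)$ is not effective, by exhibiting an intermediate discrete level $P_k$ with $i<k<j$ that achieves strictly higher utility than both $P_i$ and $P_j$ at $g=g_0^*(P_i,P_j)$. Once this is in place, neither $P_i$ nor $P_j$ is optimal there, so the candidate cannot border the decision region of either.

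First, I would analyze the unrestricted maximizer of $u(\cdot;g)$ over $p>0$. Setting $a=c\sigma^2/g>0$ and differentiating $u(p;g)=e^{-a/p}/p$ yields $\partial_p u(p;g)=u(p;g)\,(a-p)/p^2$, so $u(\cdot;g)$ is \emph{strictly unimodal} with unique maximizer $p^\star(g)=a=c\sigma^2/g$. Next, substituting the expression from Lemma~\ref{lem:pairwise decision lemma} gives
\[
p^\star\!\bigl(g_0^*(P_i,P_j)\bigr)=\frac{\ln P_j-\ln P_i}{1/P_i-1/P_j}=\frac{P_iP_j}{L(P_i,P_j)},
\]
where $L(P_i,P_j)=(P_j-P_i)/(\ln P_j-\ln P_i)$ is the logarithmic mean. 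The classical chain $\sqrt{P_iP_j}<L(P_i,P_j)<(P_i+P_j)/2$ (strict for $P_i\neq P_j$) is equivalent to $2P_iP_j/(P_i+P_j)<p^\star<\sqrt{P_iP_j}$, so $p^\star$ lies \emph{strictly} inside $(P_i,P_j)$.

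Combining this with Lemma~\ref{lem:pairwise decision lemma}, which gives $u(P_i;g_0^*(P_i,P_j))=u(P_j;g_0^*(P_i,P_j))$, finishes the argument. Because the unimodal map $p\mapsto u(p;g_0^*(P_i,P_j))$ attains its peak strictly inside $(P_i,P_j)$ while its endpoint values coincide, every $p\in(P_i,P_j)$ satisfies $u(p;g_0^*(P_i,P_j))>u(P_i;g_0^*(P_i,P_j))$. Whenever $j>i+1$ the level $P_{i+1}$ is such a $p$, and hence $u(P_{i+1};g_0^*(P_i,P_j))>u(P_i;g_0^*(P_i,P_j))=u(P_j;g_0^*(P_i,P_j))$. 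Therefore neither $P_i$ nor $P_j$ maximizes $u(\cdot;g_0^*(P_i,P_j))$ over the discrete set, so $g_0^*(P_i,P_j)$ cannot lie on the common boundary of two decision regions: it is not effective.

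The only non-routine ingredient is the inclusion $p^\star(g_0^*(P_i,P_j))\in(P_i,P_j)$, which is a direct translation of the standard logarithmic-mean/HM-GM inequality. Everything else is a one-line use of strict unimodality and of the defining identity for $g_0^*$ supplied by Lemma~\ref{lem:pairwise decision lemma}.
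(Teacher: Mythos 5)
Your proof is correct, but it follows a genuinely different route from the paper's. The paper never looks at $u$ as a function of $p$; instead it differentiates the closed-form threshold $g_{0}^{*}\left(P_{i},P_{j}\right)$ with respect to each of its arguments, shows both partial derivatives are negative, and deduces the sandwich $g_{0}^{*}\left(P_{j-1},P_{j}\right)<g_{0}^{*}\left(P_{i},P_{j}\right)<g_{0}^{*}\left(P_{i},P_{i+1}\right)$, from which at $g=g_{0}^{*}\left(P_{i},P_{j}\right)$ the level $P_{i+1}$ already dominates $P_{i}$ and $P_{j-1}$ already dominates $P_{j}$. You instead exploit the structure of the utility itself: strict unimodality of $p\mapsto u\left(p;g\right)$ together with the indifference identity $u\left(P_{i};g_{0}^{*}\right)=u\left(P_{j};g_{0}^{*}\right)$ forces every interior level to beat both endpoints. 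One remark: your logarithmic-mean computation, while correct, is superfluous --- for a strictly unimodal function, equal values at $P_{i}<P_{j}$ already imply the peak lies strictly in $\left(P_{i},P_{j}\right)$, since otherwise $u$ would be strictly monotone on $\left[P_{i},P_{j}\right]$. Stripped of that step, your argument uses only strict quasiconcavity of $u\left(\cdot\,;g\right)$ and the existence of pairwise thresholds, so it generalizes immediately beyond the specific exponential form in \eqref{eq:energy-efficiency function for 1 user}, whereas the paper's argument is tied to the explicit formula \eqref{eq:optimal thershold for scalar EE (M decisions)} (and, incidentally, contains a sign typo: the relevant inequality is $\ln x-x+1\leq0$, not $\geq0$). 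A last cosmetic point: to rule out $g_{0}^{*}\left(P_{i},P_{j}\right)$ being a boundary point of $\mathcal{C}_{i}$ or $\mathcal{C}_{j}$ rather than merely not belonging to them, invoke continuity of $u$ in $g$ so that the strict domination by $P_{i+1}$ persists in a neighborhood; this is immediate but worth a clause.
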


\begin{proof}
One can easily find that:

\begin{align}
\frac{\partial g_{0}^{*}\left(P_{i},P_{j}\right)}{\partial P_{i}} & =\frac{c\sigma^{2}}{P_{i}^{2}\left(\ln P_{j}-\ln P_{i}\right)^{2}}\left[\ln\left(\frac{P_{i}}{P_{j}}\right)-\frac{P_{i}}{P_{j}}+1\right]\nonumber \\
\frac{\partial g_{0}^{*}\left(P_{i},P_{j}\right)}{\partial P_{j}} & =\frac{c\sigma^{2}}{P_{j}^{2}\left(\ln P_{j}-\ln P_{i}\right)^{2}}\left[\ln\left(\frac{P_{j}}{P_{i}}\right)-\frac{P_{j}}{P_{i}}+1\right]
\label{eq:multi-level derivative of scalar EE}
\end{align}

One can prove that
\begin{equation}
f\left(x\right)=\ln x-x+1\geq0,\ \forall x>0   
\end{equation}

where equality is only taken when $x=1$. Knowing that $P_{i}<P_{j}$,
so we can conclude that

\begin{align}
\frac{\partial g_{0}^{*}\left(P_{i},P_{j}\right)}{\partial P_{i}} & <0\ \textrm{if}\ P_{i}<P_{j}\\
\frac{\partial g_{0}^{*}\left(P_{i},P_{j}\right)}{\partial P_{j}} & <0\ \textrm{if}\ P_{i}>P_{j}
\label{eq: sign of multi-level derivative of scalar EE}
\end{align}

Thus we have for any $i\neq j$ such that $P_{i}<P_{j}$ we always
have
\vspace{-0.1in}
\begin{align*}
g_{0}^{*}\left(P_{i},P_{i+1}\right)>\dots>g_{0}^{*}\left(P_{i},P_{j-1}\right)>g_{0}^{*}\left(P_{i},P_{j}\right)\\
g_{0}^{*}\left(P_{i},P_{j}\right)>g_{0}^{*}\left(P_{i+1},P_{j}\right)>\dots>g_{0}^{*}\left(P_{j-1},P_{j}\right)  
\end{align*}

So the conclusion is obvious.
\end{proof}

\subsection{Energy Efficiency in MIMO system}
\label{subsec:Energy Efficiency in MIMO system}
 We consider the following single user Multiple Input Multiple Output (MIMO) communication system. The receiving signal is modeled by:

\begin{equation}
\boldsymbol{y}=\mathbf{H}\boldsymbol{x}+\boldsymbol{z}\label{eq:MIMO system model}
\end{equation}

where $\mathbf{H}$ is the $N_{r}\times N_{t}$ channel transfer matrix
with $N_{t}$ transmit antennas and $N_{r}$ receive antennas. We
assume the entries of $\mathbf{H}$ are i.i.d. zero-mean circularly
symmetric complex Gaussian distributed according to $\mathcal{CN}\left(0,1\right)$.
A vector $\boldsymbol{x}$ is the transmitting symbols vector with
dimension $N_{t}$ and $\boldsymbol{z}$ is the receiving white Gaussian
noise vector distributed as $\mathcal{CN}\left(\boldsymbol{0},\sigma^{2}\boldsymbol{\mathbf{I}}_{N_{r}}\right)$.
Moreover $\boldsymbol{\mathbf{Q}}=\mathbb{E}\left[\boldsymbol{x}\boldsymbol{x}^{H}\right]$
denote the covariance matrix of $\boldsymbol{x}$ which determines
the power allocation policy. And we have the common maximum total
power constraint:

\begin{equation}
\textrm{Tr}\left(\boldsymbol{\mathbf{Q}}\right)\leq P_{max}\label{eq:power constraint in trace forme}
\end{equation}

Given this matrix-form of the system, we define the Energy
Efficiency as follows:

\begin{equation}
u^{\textrm{MIMO}}\left(\mathbf{Q};\mathbf{H}\right)=\frac{R_{0}\log_{2}\left|\boldsymbol{\mathbf{I}}_{N_{r}}+\rho\mathbf{HQ}\mathbf{H}^{H}\right|}{\textrm{Tr}\left(\boldsymbol{\mathbf{Q}}\right)+P_{0}}\label{eq:single user MIMO EE with P0}
\end{equation}
where  $\rho=\frac{1}{\sigma^{2}}$,  $R_{0}$  is the raw data rate (in bits/s) and  $P_{0}$  represents the power consumed by the transmitter when the radiated power is zero. For instance, in \cite{betz2008} it may represent the computation power or the circuit power.

The existence of $P_0$ is not only reasonable but also avoids
the following fact that the most efficient transmission occurs when
$p=\textrm{Tr}\left(\boldsymbol{\mathbf{Q}}\right)=0$. The global decision set is the Equal Gain Transmission (EGT) with
antenna selections. Without loss of generality, we only consider diagonal
covariance matrix of the transmission signal, i.e., $\boldsymbol{\mathbf{Q}}=\mathbf{Diag}\left(\boldsymbol{p}\right)$
with $\boldsymbol{p}=\left(p_{1},\dots,p_{N_{t}}\right)$. Where $\mathbf{Diag}\left(\boldsymbol{v}\right)$ generates the diagonal matrix  whose diagonal is exactly the vector $\boldsymbol{v}$. The decision
set is chosen as following form:

\begin{equation}
\mathcal{D}=\left\{ \boldsymbol{\mathbf{Q}}=\frac{P_{max}}{l}\mathbf{Diag}\left(\boldsymbol{e}\right)\left|\boldsymbol{e}\in\mathcal{S}_{l},\ \forall l\leq N_{t}\right.\right\} \label{eq:global decision set}
\end{equation}

where $\mathcal{S}_{l}=\left\{ \boldsymbol{e}\in\left\{ 0,1\right\} ^{N_{t}}\left|\sum_{i=1}^{N_{t}}\boldsymbol{e}_{i}=l\right.\right\} $
which is the set of $N_{t}$ -dimensional binary vector summing to
$l$. The decision set $\mathcal{D}_{k}$ associated to a decisional
quantizer with $k\leq2^{N_t}-1$ decisions can be constructed as follows iteratively:

\begin{align}
\mathcal{D}_{k} & =\begin{cases}
\left\{ \mathbf{Q}_{1}\right\}  & \mathbf{Q}_{1}\in\mathcal{D},\ k=1\\
\mathcal{D}_{k-1}\cup\left\{ \mathbf{Q}_{k}\right\}  & \mathbf{Q}_{k}\in\mathcal{D}\backslash\mathcal{D}_{k}
\end{cases}\label{eq:decision set iterative generation}
\end{align}

The singleton set is chosen among all possible sets randomly. Consider the optimality of the decision set, we choose the maximum
total power $P_{max}=P^{*}$ s.t.

\begin{equation}
P^{*}\in\arg\max_{\overline{P},\boldsymbol{\mathbf{Q}}\in\mathcal{D}}\mathbb{E}_{\mathbf{H}}\left[u^{\textrm{MIMO}}\left(\mathbf{Q};\mathbf{H}\right)\right]\label{eq:threshold of the total power}
\end{equation}

$P^{*}$ can be found by comparison through Monte-Carlo simulation. One can imagine that finding the analytical decisional quantizer for EGT will be very difficult if the dimension of the system is huge. 
Thus we propose to use a neural network to mimic the real decisional quantizer.


\section{Simulation results}
\label{sec:Simulation results}
In this section, we present several simulations that illustrate the performance by using the proposed approach with neural-network. Here, we choose the 3-hidden-layer FNN with fully connected layers comprising 20
neurons each and using the logistic activation function defined
as $\mathrm{sig}(x)=\frac{1}{1+\exp(-x)}$. We use the \emph{Levenberg Marquardt Algorithm} in \cite{alsenl1963} to update the weight matrix. In this FNN model, $100000$ Monte-Carlo realizations will be divided into three phases: $70000$ realizations for the training phase, $15000$ for the validation phase  and  $15000$ realizations for the test phase. The structure of FNN is illustrated in Fig.~\ref{fig:neural network model}. Subsequently, simulation results will be illustrated in two different scenarios: the multiple band case and the multiple antenna case.

\begin{figure}[htbp]
\centerline{\includegraphics[width=1.0\linewidth,height=0.3\linewidth]{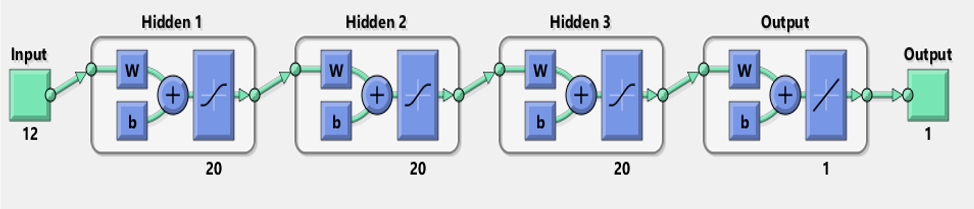}}
\caption{Feed-forward neural network model for MIMO system ($N_{t}=3$ and $N_{r}=2$). Number of neurons in input layer is $2N_tN_r$.}
\label{fig:neural network model}
\end{figure}

\textbf{Multi-band:} Firstly, we focus on the multi-band case. For EE defined in Eq. \ref{eq:energy efficiency definition}, The number of input neurons for Eq. \ref{eq:energy efficiency definition} is obviously the number of bands $N$.  Fig. \ref{fig:decision region 2 bands}  illustrates the decision regions for the following simulation configuration: there are two bands in the system ($N=2$), every band has only two choices to choose: $P_{min}=2\mathrm{mW}$, $P_{max}=3\mathrm{mW}$. The noisy level is set to be $\sigma^{2}=10\mathrm{mW}$ and the constant is assumed to be $c=1$. The channel gain $g_{i}$ in band $i$ is assumed to be exponentially distributed, i.e., its p.d.f. is $\phi\left(g_{i}\right)=\exp\left(-g_{i}\right)$.
There follows our intuitive explanation. Let us take the orange region $\left(P_{min},P_{max}\right)$ as an example. In this region, channel gain $g_{1}$ is smaller than $g_{2}$ which means transmission in band $1$ is less efficient than band $2$, therefore the transmitter chooses the policy $\left(P_{min},P_{max}\right)$. Same principle can be applied to the $3$ remaining regions.
\begin{figure}[htbp]
\begin{centering}
\includegraphics[scale=0.35]{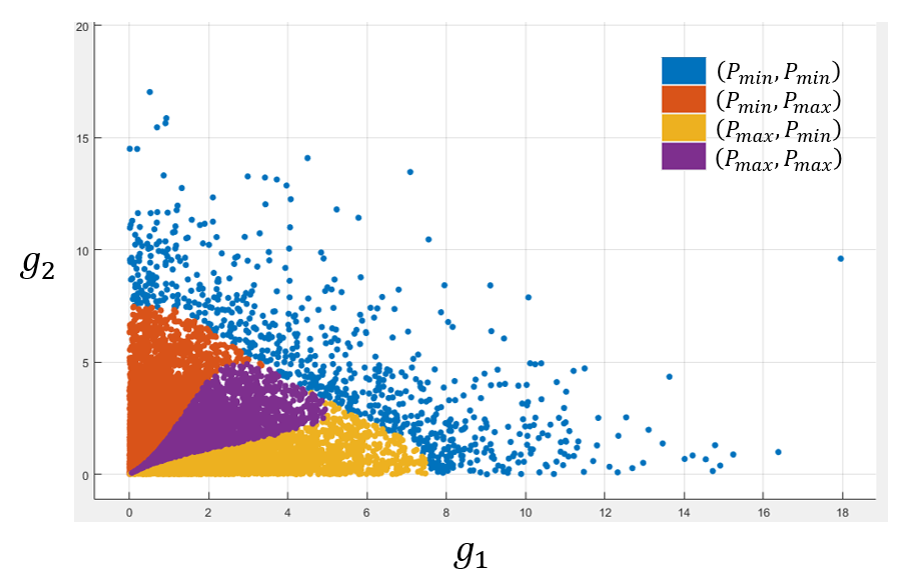}
\par\end{centering}
\caption{Decision regions of $2$-band EE problem. When one channel is dominant, it is better to transmit with higher power levels in that dominant channel. Otherwise, both transmitters choose the same transmit power.}
\label{fig:decision region 2 bands}
\end{figure}
Apart from the EE function, we study the sum-rate utility as follows: 
\begin{equation}
u^{\mathrm{SR}}(\textbf{p};\textbf{g}) = \sum_{i=1}^N  \log(1+\mathrm{SNR}_i)
\end{equation}
To compare the performance of DOQ by using
a decisional quantizer approach implemented through a FNN,
the relative optimality loss introduced by quantization is defined
as following:
\begin{equation}
\Delta u\left(\%\right)=\mathbb{E}_{\boldsymbol{g}}\left[\left|\frac{u^{*}\left(\boldsymbol{g}\right)-u^{\textrm{NN}}\left(\boldsymbol{g}\right)}{u^{*}\left(\boldsymbol{g}\right)}\right|\right]\times100\label{eq:relative optimality loss definition}
\end{equation}

where $u^{*}\left(\boldsymbol{g}\right)=\max_{\boldsymbol{p}}u\left(\boldsymbol{p};\boldsymbol{g}\right)$
and $u^{\textrm{NN}}\left(g\right)$ is the performance achieved by our learning approach.
Besides, to compare the influence of the compression between the system with different objectives, Define the compression rate $\gamma\left(\sigma\right)$ of a given
relative optimality loss $\sigma$  as
$\gamma\left(\sigma\right)\coloneqq\frac{M\left(1\%\right)}{M\left(\sigma\right)}$, where $M\left(\sigma\right)$ is the required number of decisions such
that the relative optimality loss $\sigma$ can be satisfied. Fig.~\ref{fig:compression_rate_optimality_loss}
illustrates the compression rate $\gamma$ in function of optimality loss
 for two bands in two cases. With the two different utilities, it can be seen that the compression rate increases as the optimality loss grows. For the energy efficiency problem, the compression
rate decreases slowly while the optimality loss decreases and the loss
is always greater than $1\%$. As for the sum-rate problem, the compression
rate declines rapidly while the optimality loss reduces and the optimality
loss is always less than $1\%$. It can be observed that it is easier to compress the $\textbf{g}$ for the  sum-rate problem than the energy efficiency in two-band scenario, i.e., the energy efficient function is more sensitive to the variable $g$. This can be explained by the fact that  the explicit optimal decision function of sum-rate, well known as the water-filling solution, is more concise than the solution of the energy efficiency problem, which is inversely proportional to $g$.

\begin{figure}[tbh]
\begin{centering}
\includegraphics[scale=0.4]{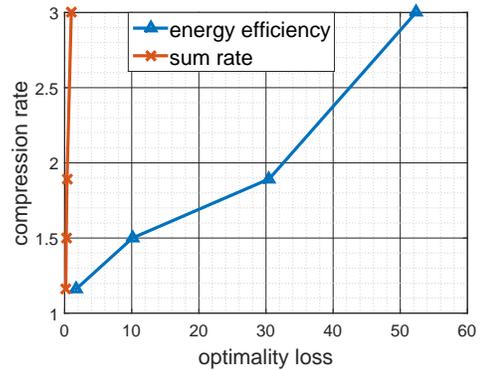}
\par\end{centering}
\caption{The compression rate as a function of the optimality loss for single
user 2-band scenario for energy efficiency and sum-rate capacity. Compressing the channel gain for sum-rate function is easier than compressing the channel gain for energy-efficiency function.}
\label{fig:compression_rate_optimality_loss}
\end{figure}


\textbf{MIMO:} The simulation results of the MIMO system considered in section \ref{subsec:Energy Efficiency in MIMO system} are presented in Fig. \ref{fig:Nt_4_Nr_1_average_utility} ($N_{t}=4$ and $N_{r}=1$ (MISO), $R_0=10^6 $ bits/s, $\sigma^2=5\mathrm{mW}$, $P_{0}=10\mathrm{mW}$ and $P_{max}=12\mathrm{mW}$.) and in Fig. \ref{fig:Nt_3_Nr_2_average_utility}, ($N_{t}=3$ and $N_{r}=2$ (MIMO), $R_0=10^6 $ bits/s, $\sigma^2=5\mathrm{mW}$,  $P_{0}=10\mathrm{mW}$  and $P_{\max}=10\mathrm{mW}$) (We use exhaustive search to find the optimal $P_{\max}$ by solving (\ref{eq:threshold of the total power})), respectively. The number of neurons in input layer for Eq. \ref{eq:single user MIMO EE with P0} is given by $2N_tN_r$ because the input  vector contains the real part and the  imaginary part of each entry of the transfer channel matrix. Given the same parameter samples, a $k$-means quantizer which aims at minimizing the mean square error between the original signal and the quantized signal,  is taken as the reference. All the realizations are divided into $k$ regions and each region is
assigned with the optimal decision in $\mathcal{D}_{k}$ found through
exhaustive research. It is worth noting that this $k$-means approach can be seen as a special case implementing Algo.~\ref{alg:Decisional-Quantization-Algorithm} by taking $u\left(\boldsymbol{x};\boldsymbol{g}\right)=-\left\Vert \boldsymbol{x}-\boldsymbol{g}\right\Vert ^{2}$.

In both two cases, the decisional quantizer outperforms
than the $k$-means quantizer. In MISO scenario, NN can achieve very
close performance to the optimal average utility in several decision
set ($\mathcal{D}_{k}$, $k=2,5,6,7$ and $k\geq9$) while the average
utility found through $k$-means quantizer is trite. In MIMO scenario,
the performance of NN is still better than $k$-means quantizer. The
utility loss introduced by the FNN is perhaps owing to the scarcity
of training. More complicated structure of NN should be used to improve
the training accuracy.

\begin{figure}[htbp]
\begin{centering}
\includegraphics[scale=0.40]{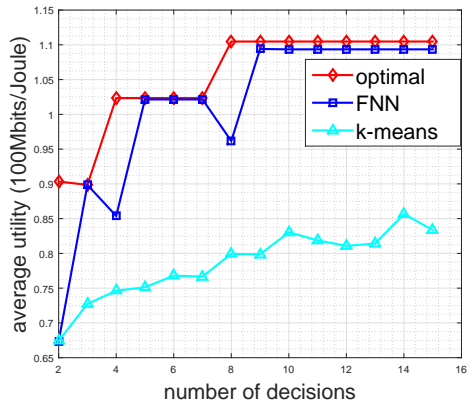}
\par\end{centering}
\caption{The average utility vs. number of decisions for $N_{t}=4$ and $N_{r}=1$ (MISO) , $\sigma^2=5\mathrm{mW}$, $P_{0}=10\mathrm{mW}$ and $P_{max}=12\mathrm{mW}$. FNN is better than k-means quantizer and close to theoretical optimal utility.}
\label{fig:Nt_4_Nr_1_average_utility}
\end{figure}

\begin{figure}[htbp]
\begin{centering}
\includegraphics[scale=0.40]{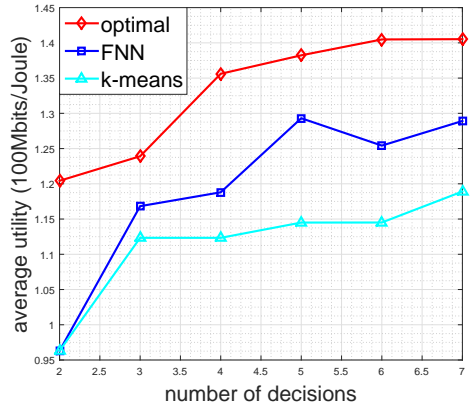}
\par\end{centering}
\caption{The average utility vs. number of decisions for $N_{t}=3$ and $N_{r}=2$ (MIMO) , $\sigma^2=5\mathrm{mW}$, $P_{0}=10\mathrm{mW}$ and $P_{max}=10\mathrm{mW}$. FNN is better than k-means quantizer and close to theoretical optimal utility. }
\label{fig:Nt_3_Nr_2_average_utility}
\end{figure}

\section{Conclusion}
\label{sec:Conclusion}
In this paper, we have introduced in a formal way the problem of DOC, and proposed solutions for DOQ. When applied to the problem of power allocation, it is seen that quantizing the channel gains very roughly only induces a very small optimality loss w.r.t. the case where the gains are perfectly known to the transmitter when the utility is the transmission rate. However, for energy-efficiency, channel gains need to quantized more accurately. Using a classical distortion-based quantization scheme (k-means quantization) for this is shown to lead to a quite significant performance loss (about $30\%$), showing the potential of our approach. To better assess the potential of the proposed approach, it should be generalized to decision-oriented source coding and decision-oriented channel coding. Also, it allows one to reconsider the overarching assumption made in resource allocation problem, that is the resource allocation policy is designed by assuming perfect knowledge of the parameters. Mathematically, a deep study should be developed to identifying the properties of the utility function which represents its sensitivity to being maximized under imperfect knowledge of its parameters.

%
%

\section*{Acknowledgement}
This work was funded by the RTE Chair and the PEPS YPSOC project funded by CNRS.

\end{document}